\newtheorem{theorem}{Theorem}
\newtheorem{definition}{Definition}
\newtheorem{proposition}{Proposition}
\title{\LARGE \bf
Learning Autonomous Vehicle Safety Concepts from Demonstrations}
\author{Karen Leung$^{\star\dagger}$, Sushant Veer$^{\star}$, Edward Schmerling$^{\star\ddagger}$, Marco Pavone$^{\star\ddagger}$
\thanks{$^\star$NVIDIA, $^\dagger$University of Washington, $^\ddagger$Stanford University, {\tt\small \{kymleung@uw.edu, kaleung@nvidia.com\}}}%
}
\newcommand{\jointstate}{x}
\newcommand{\vA}{v_\mathrm{A}}         
\newcommand{\vB}{v_\mathrm{B}}         
\newcommand{\uA}{u_\mathrm{A}}    
\newcommand{\uB}{u_\mathrm{B}}    
\newcommand{\UA}{\mathcal{U}^\mathrm{A}}    
\newcommand{\UB}{\mathcal{U}^\mathrm{B}}    
\newcommand{\UAz}{\widetilde{\mathcal{U}}^\mathrm{A}}    
\newcommand{\UBz}{\widetilde{\mathcal{U}}^\mathrm{B}}    
\newcommand{\dataset}{\mathcal{S}}
\newcommand{\hjvalue}{\mathcal{V}}
\newcommand{\targetset}{\mathcal{T}}
\newcommand{\hocbf}{\mathrm{HOCBF}}
\newcommand{\lie}[2]{\mathcal{L}_{#1}#2}
\newcommand{\liem}[3]{\mathcal{L}_{#1}^{#3}#2}
\begin{document}

\maketitle
\thispagestyle{empty}
\pagestyle{empty}

\begin{abstract}
Evaluating the safety of an autonomous vehicle (AV) depends on the behavior of surrounding agents which can be heavily influenced by factors such as environmental context and informally-defined driving etiquette.
A key challenge is in determining a minimum set of assumptions on what constitutes \textit{reasonable foreseeable behaviors} of other road users for the development of AV safety models and techniques.
In this paper, we propose a \textit{data-driven} AV safety design methodology that first learns ``reasonable'' behavioral assumptions from data, and then synthesizes an AV safety concept using these learned behavioral assumptions.
We borrow techniques from control theory, namely high order control barrier functions and Hamilton-Jacobi reachability, to provide inductive bias to aid interpretability, verifiability, and tractability of our approach. In our experiments, we learn an AV safety concept using demonstrations collected from a highway traffic-weaving scenario, compare our learned concept to existing baselines, and showcase its efficacy in evaluating real-world driving logs.
\end{abstract}

\section{Introduction}
\label{sec:introduction}

As autonomous vehicle (AV) operations grow, developing appropriate methods for evaluating AV safety becomes ever more imperative. 
The question of ``is a vehicle in an unsafe state?'' is relevant for AV system developers, policymakers, and the general public alike (see Figure~\ref{fig:hero}). While \emph{guaranteeing} safety may not be practical in the face of the myriad uncertainties and complexities that come with real-world driving, there is still a broad desire to codify, to some extent, collectively agreed-upon notions of safety \cite{IEEEP28462022}.
Should safety be defined using data-driven methods that can account for the complexities of the AV's environment but lack interpretability and formal guarantees, or leverage control theoretic techniques derived from first principles which are interpretable and rigorous but not as scalable or expressive as their learned counterparts? 
In this work, we strike a middle-ground by a learning safety-critical driving behavior model and integrating it within a robust control framework to develop an interpretable and rigorous AV safety model that is informed by data.

Towards this goal of building safe and trustworthy AVs, various stakeholders have advanced \textit{safety concepts} consisting, in general, of two functions mapping world state (e.g., joint state of all agents and environmental context) to (i) a scalar measure of safety, and (ii) a set of allowable (safe) agent actions \cite{LeungBajcsyEtAl2022}.
Numerous uses of such safety concepts have been proposed throughout AV pipelines, e.g., a criterion to prune away unsafe plans \cite{PhanMinhHowingtonEtAl2022}, a safety monitor to determine when evasive action must be taken \cite{LeungSchmerlingEtAl2020}, a component in the planning objective \cite{WangLeungEtAl2020}, or for perception safety evaluation metrics \cite{OborilBuerkleEtAl2022,TopanLeungEtAl2022}.
Critically, different behavioral assumptions lead to different safety concepts which in turn affects realized AV safety and performance.
To mitigate overconservatism and enable maximum flexibility, we contend that the design of an effective safety concept hinges upon characterizing reasonable foreseeable behaviors of other agents, in a way conducive to efficiently evaluating the safety of driving scenes and producing associated safe controls.

To address this challenge, we propose designing novel safety concepts by learning from data what are controls, specifically, control sets, that humans operate with when their safety is threatened, and then using these learned control sets to inform AVs of what are reasonable foreseeable behaviors of other agents in safety-critical scenarios. Equipped with such a learned ``human behavior collision avoidance model,'' we perform safety concept synthesis by using robust control theory, specifically Hamilton-Jacobi reachability \cite{MitchellBayenEtAl2005}, as a powerful inductive bias for interpretability, verifiability, and tractability. 
Our control set learning approach differs from reward learning paradigms (i.e., inverse reinforcement learning / inverse optimal control \cite{AbbeelNg2004,ZiebartMaasEtAl2008,LevineKoltun2012}) which strive to learn high-level human intentions from demonstrations, often in the absence of constraints \cite{PhanMinhHowingtonEtAl2022,SadighSastryEtAl2016c}. Reward learning approaches aim to model a human's high-level planning objective which encapsulates more than just safety considerations. Whereas in this work, we are interested in \textit{learning control constraint boundaries associated with collision avoidance behaviors} as opposed to nuanced behaviors (e.g., aggressive versus passive).

\begin{figure}[t]
    \centering
    \includegraphics[width=0.47\textwidth]{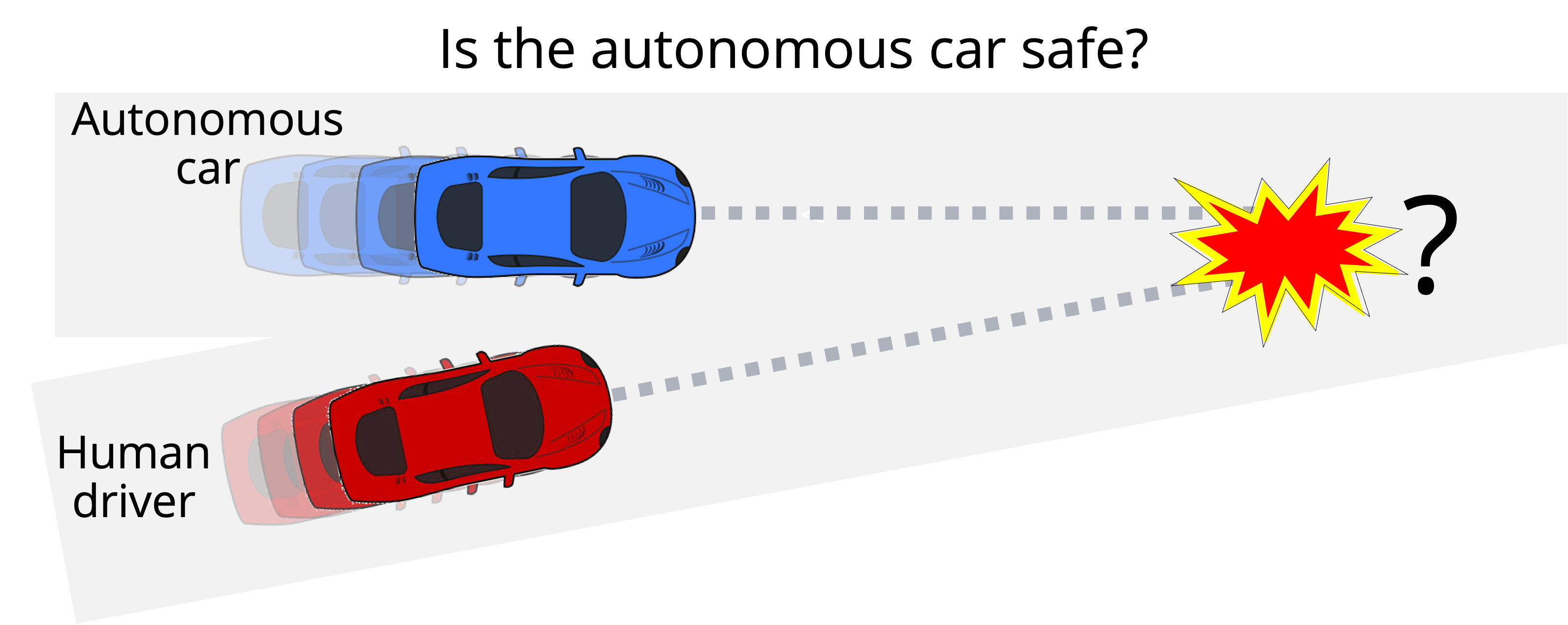}
    \caption{Evaluating the safety of an autonomous vehicle (AV) depends on what constitutes as \textit{reasonable foreseeable behaviors} of other road users. For example, determining whether the autonomous car (blue) is currently in a safe state depends on how the human driver (red) may behave (e.g., speed up or swerve away). In this work, we develop a data-driven methodology to model reasonable foreseeable behaviors and leverage the learned model for AV safety evaluation.}
    \label{fig:hero}
\end{figure}

\noindent \textbf{Structure and Contributions.}
We provide a literature review in Section~\ref{sec:related work}, give an overview on Hamilton-Jacobi (HJ) reachability in Section~\ref{sec:hj reachability}, and formally state our safety concept learning problem in Section~\ref{sec:problem formulation}. Then we describe the details of our key contributions:
\textbf{(i)} We propose a data-driven approach to learn humans' collision avoidance behaviors in the control space to capture ``reasonable driving behaviors'' (Section~\ref{sec:learning control sets}). Specifically, we learn safe control sets from demonstrations via a high order control barrier function (HOCBF) \cite{XiaoBelta2021} framework. 
\textbf{(ii)} We develop a constrained game-theoretic optimization problem derived from a HJ reachability formulation to synthesize novel data-driven safety concepts that are robust to other agents' behaviors while respecting the learned collision avoidance behaviors (Section~\ref{sec:data-driven safety}).
\textbf{(iii)} We demonstrate our proposed learning framework using highway driving data and show that the resulting data-driven safety concept is less conservative than other common safety concepts (due to the way it captures constraints on reasonable agent behavior) and thus is useful as a ``responsibility-aware'' evaluation metric for the safety of AV interactions (Section~\ref{sec:results}).


\section{Related Work}
\label{sec:related work}

We use the term \textit{safety concept} to help unify existing safety theory prevalent in various robot planning and control algorithms, such as velocity obstacles \cite{FioriniShiller1998,WilkieVanDenBergEtAl2009}, forward reachable sets \cite{HolmesKousikEtAl2020,AlthoffDolan2014}, contingency planning \cite{KuwataTeoEtAl2009,JansonHuEtAl2018}, backward reachability \cite{MitchellBayenEtAl2005}, and other methods that make static assumptions on agent behavior \cite{NisterLeeEtAl2019,ShalevShwartzShammahEtAl2017}. The differences between various safety concepts stem from the assumptions about the behavior of other interacting agents, ranging from worst-case assumptions \cite{MitchellBayenEtAl2005} to presuming agents follow fixed open-loop trajectories (e.g., braking \cite{ShalevShwartzShammahEtAl2017}, constant velocity \cite{WilkieVanDenBergEtAl2009}).
Indeed, a core challenge is in selecting behavioral assumptions that balances conservatism, tractability, interpretability, and compatibility with real-world interactions.

Recent works propose dynamically changing the conservatism of the safety concept based on online estimations of how confident the robot's human behavior prediction model is. If a human agent is behaving as expected (i.e., high model confidence), then the worst-case assumptions in the safety concept can be relaxed, and vice versa \cite{FridovichKeilBajcsyEtAl2019,TianSunEtAl2022}. 
However, the integrity of the adaptive safety concept depends on the quality of the prediction model where obtaining an accurate human behavior prediction model is in general quite challenging and, indeed, is an active research field \cite{RudenkoPalmieriEtAl2020}.

Another data-driven safe control technique is to use expert demonstrations and learn a control barrier function (CBF) \cite{AmesCooganEtAl2019} to describe unsafe regions in the state space \cite{RobeyHuEtAl2020,RobeyLindemannEtAl2021,QinZhangEtAl2021,LyuLuoEtAl2022}. The learned CBF is then \textit{directly} used as the core safety mechanism in synthesizing a safe policy. However, CBFs are not well-suited for interactive settings where there is uncertainty in how other interacting agents may behave.
In our work, we too consider learning CBFs (specifically, high order CBFs (HOCBFs) \cite{XiaoBelta2021}), but instead use the learned HOCBF as an intermediate step towards formulating a more rigorous notion of safety rooted in robust control theory which enjoys interpretability and verifiability benefits.

\section{Safety Concept via Hamilton-Jacobi Reachability}
\label{sec:hj reachability}

We define a safety concept as a combination of two functions mapping world state to (i) a scalar measure of safety, and (ii) a set of allowable actions for each agent in which to preserve safety. A family of safety concepts can be described via a HJ reachability formulation \cite{LeungBajcsyEtAl2022}.

HJ reachability is a mathematical formalism used for characterizing the safety properties of dynamical systems \cite{MitchellBayenEtAl2005,MargellosLygeros2011}. The outputs of a HJ reachability computation are (i) a HJ value function, a scalar-valued function that measures ``distance'' to collision, and (ii) a set of controls that prevents the safety measure from decreasing further---precisely the components needed for a safety concept.

Consider a target set $\targetset$ which is the set of collision states between agents $A$ (ego agent) and $B$ (contender).
The HJ reachability formulation describes a two-player differential game to determine whether it is possible for the ego agent to avoid entering $\targetset$ under any family of closed-loop policies of the contender, as well as the ego agent's appropriate control policy for ensuring safety. 
It is assumed that the contender follows an \textit{adversarial} policy and has the advantage with respect to the information pattern.
Using the principle of dynamic programming, the collision avoidance problem reduces to solving the Hamilton-Jacobi-Isaacs (HJI) partial differential equation (PDE)\cite{MitchellBayenEtAl2005},

\vspace{-3mm}
{\small
\begin{equation}
    \begin{aligned}
    &\frac{\partial \hjvalue(\jointstate, t)}{\partial t} + 
    \min \Big\{0,~
    {\begingroup
    \max_{\uA \in \UA} \min_{\uB \in  \UB}
    \endgroup}
    \nabla_{\jointstate} \hjvalue(\jointstate, t)^\top f(\jointstate, \uA, \uB) \Big\} = 0\\
    &\hjvalue(\jointstate, 0) = \ell(\jointstate)
\end{aligned}
\label{eq:hji pde}
\end{equation}
}

\noindent where $\jointstate \in \mathcal{X}$ denotes the joint state of agents $A$ and $B$, $\uA \in \UA$ and $\uB\in\UB$ are the available (bounded) controls\footnote{The control sets $\UA$ and $\UB$ are typically chosen to reflect the physically feasible limits of the system.} of agents $A$ and $B$, respectively, and $f(\cdot,\cdot,\cdot)$ is the joint dynamics assumed to be measurable in $\uA$ and $\uB$ for each $\jointstate$, and uniformly continuous, bounded, and Lipschitz continuous in $\jointstate$ for fixed $\uA$ and $\uB$.\footnote{This assumption ensures that trajectories are generated by a unique control sequence.} 
The boundary condition is defined by a function $\ell: \mathcal{X} \rightarrow \mathbb{R}$ whose zero sub-level set encodes the target set, i.e., $\targetset =  \lbrace \jointstate \mid \ell(\jointstate) < 0\rbrace$.
The solution $\hjvalue(\jointstate,t),\, t\in [-T,0]$, called the HJ value function, captures the lowest value of $\ell(\cdot)$ along the system trajectory within $|t|$ seconds if the systems starts at $\jointstate$ and both agents $A$ and $B$ act optimally, that is, $\uA^*(\jointstate),\,\uB^*(\jointstate) = \arg \max_{\uA \in \UA}(\arg)\min_{\uB \in \UB} \nabla_{\jointstate} \hjvalue(\jointstate, t)^\top f(\jointstate, \uA, \uB)$. Thus the HJ value function fulfills the first aspect of a safety concept. 
After obtaining the HJ value function, we can also consider the set of controls that prevent the HJ value function from decreasing over time. That is, we can compute the \textit{safety-preserving control set},  $\mathcal{U}^\mathrm{A}_{\mathrm{safe}}(\jointstate) = \lbrace \uA \in \UA \mid  \min_{\uB \in \UB} \frac{d\hjvalue(\jointstate,t)}{dt} \geq 0 \rbrace $, thus fulfilling the second aspect of a safety concept.
By varying the problem parameters, i.e., control sets, behavior type, and a choice of $\ell(\cdot)$, we can synthesize a \textit{family} of safety concepts via HJ reachability. 

Although solving HJI PDE suffers from the curse of dimensionality, it is performed \emph{offline}. For reasonably sized problems, such as the pairwise car-car system considered in the work, solving the HJI PDE is tractable on modern computers. Equipped with the HJ value function and a state $x$, computing $\hjvalue(\jointstate, t)$ and $\mathcal{U}^\mathrm{A}_{\mathrm{safe}}(\jointstate)$ is computationally lightweight, consisting of a table look-up, interpolation and/or matrix-vector multiplication.


\section{Towards Data-driven Safety Concept Synthesis}
\label{sec:problem formulation}
A key limitation to the standard HJ reachability setup is in the assumption the contender will choose \textit{optimal collision-seeking} controls from $\UB$, the set of admissible controls of the system (typically corresponding to the system's physical limits). While a worst-case assumption is key in providing robustness guarantees for the safety of the system, the assumption that a contender agent will execute \textit{any} available, including worst-case, controls is unrealistic in practice.

To reduce the conservatism in how we model a contender's behavior within the HJ formulation (and hence resulting safety concept), we aim to use data to learn \textit{state-dependent control sets} to capture agent behaviors reflecting practical real-world operations. That is, \textbf{the problem we seek to solve consists of two parts: (i) learning state-dependent control sets from data, and (ii) integration of the state-dependent control sets into the HJ reachability formulation for safety concept synthesis.}
Building upon the notation introduced in Section~\ref{sec:hj reachability}, consider a nonlinear control-disturbance-affine\footnote{Many vehicle dynamics models can be written as a control-affine system.} system describing the joint dynamics between agent $A$ and $B$,
\begin{equation}
    \dot{\jointstate} = f(\jointstate) + g(\jointstate)\uA + h(\jointstate)\uB
    \label{eq:pairwise dynamics}
\end{equation}
where $\jointstate \in \mathcal{X} \subset \mathbb{R}^n$,  $\uA \in \UA\subset \mathbb{R}^{m_\mathrm{A}}$ and $\uB\in\UB \subset \mathbb{R}^{m_\mathrm{B}}$, and $f: \mathcal{X} \rightarrow \mathbb{R}^n$, $g: \mathcal{X} \rightarrow \mathbb{R}^{n \times m_\mathrm{A}}$, and $h: \mathcal{X} \rightarrow \mathbb{R}^{n\times m_\mathrm{B}}$ are locally Lipschitz continuous. 

\noindent{\bf Learning state-dependent control sets.} Consider an offline dataset $\dataset = \{ (\jointstate^{(i)}, \uA^{(i)}) \}_{i=1}^{N}$ of joint states between agent $A$ and $B$, and agent $A$'s controls collected from \textit{safe} interactions between agents $A$ and $B$. We assume agent $A$ does not observe the controls of agent $B$.
We seek to find a mapping $\phi: \mathcal{X} \rightarrow 2^{\UA}$ such that $\uA^{(i)} \in \phi(x^{(i)})$ for all $(\jointstate^{(i)}, \uA^{(i)}) \in \dataset$. Of course $\phi$ can be a trivial mapping, i.e., $\phi(x^{(i)}) = \UA$ for all $\jointstate \in \mathcal{X}$; thus we additionally require the learned state-dependent sets to be \textit{minimal} in the sense that the set should contain the relevant points as tightly as possible.

\noindent{\bf Safety concept synthesis.} Equipped with $\phi$, we seek to integrate the learned control sets into the HJ reachability framework to \textit{restrict} the feasible control set that an agent will operate with.


\section{Learning State-Dependent Control Sets \\From Data}
\label{sec:learning control sets}
In this section, we describe our proposed approach to describing human collision avoidance behaviors by learning state-dependent control sets from data. 
First we give a self-contained overview to HOCBFs (Section~\ref{subsec:hocbf}), then followed by our proposed learning procedure (Section~\ref{subsec:hocbf learning}).

\subsection{High-Order Control Barrier Functions}
\label{subsec:hocbf}

HOCBF provides a method to represent and reason about control invariant sets \cite[Theorem 4]{XiaoBelta2021}. 
Before introducing HOCBFs, we first introduce control barrier functions (CBFs), a special case of HOCBFs.
Consider a control-affine system of the form,
\begin{equation}
    \dot{x}=f(x) + g(x)u, \: \text{where}\: x\in\mathcal{X}\subset \mathbb{R}^n, \: u\in \mathcal{U} \subset \mathbb{R}^m,
    \label{eq:control affine dynamics}
\end{equation}
where $f: \mathcal{X} \rightarrow \mathbb{R}^n$ and $g: \mathcal{X} \rightarrow \mathbb{R}^{m_\mathrm{A}}$ are locally Lipschitz continuous. 
Suppose there is a set which the system wants to stay inside of (e.g., set of safe states).
A control invariant set is a set of states for which it is possible for a system \eqref{eq:control affine dynamics} to stay inside of indefinitely. In many settings, it is often desirable to find the largest control invariant set that is contained within the safe set.

\begin{definition}[Control invariant set]
A set $\mathcal{C}$ is control invariant for a system $\dot{x}=f(x,u)$ if for all $x_0\in \mathcal{C}$, there exists a control law $u=k(x) \in \mathcal{U}$ such that $x(t) \in \mathcal{C},\, x(0) = x_0,\, \dot{x}(t)=f(x(t),u)$ for all $t\in[0,\infty)$.
\label{def:control invariant set}
\end{definition}

\noindent One way to represent a control variant set is to use Control Barrier Functions, a scalar valued function which satisfies a set of certain conditions which we describe next.
\begin{definition}[Extended Class $\mathcal{K}_\infty$ function]
A continuous function $\alpha :(-\infty, \infty) \rightarrow (-\infty, \infty)$, belongs to extended class $\mathcal{K}_\infty$ if it is strictly increasing, $\alpha(0) = 0$, and $\lim_{r\rightarrow \pm\infty} \alpha(r) = \pm\infty$.
\end{definition}

\begin{definition}[Control Barrier Function  \cite{AmesCooganEtAl2019}]
Given a system \eqref{eq:control affine dynamics}, consider a continuously differentiable function $b: \mathcal{X} \rightarrow \mathbb{R}$ and a set $\mathcal{C} \subset \mathcal{X}$ where $\mathcal{C} = \{ x \in \mathcal{X} \mid b(x) \geq 0\}$. Then $b$ is a candidate control barrier function (CBF) if there exists a class $\mathcal{K}_\infty$ function $\alpha$ such that
\begin{equation}
    \sup_{u\in \mathcal{U}} \left[ \lie{f}{b}(x) + \lie{g}{b}(x)u \right] \geq - \alpha(b(x)) \quad  \forall x\in\mathcal{X},
    \label{eq:cbf}
\end{equation}
where $\lie{f}{b}(x)=\nabla b(x)^Tf(x)$ and $\lie{g}{b}(x)=\nabla b(x)^Tg(x)$ are Lie derivatives describing the change in $b(x)$ along a vector field described by $f(x)$ and $g(x)$, respectively.
\label{def:cbf}
\end{definition}

\noindent A valid CBF ensures that there exists a feasible control that prevents $b(x(t))$ from decreasing faster than a rate of $-\alpha(b(x))$ along system trajectories.
If $\lie{g}{b}(x) = 0$, the control input $u$ has no influence in ensuring that \eqref{eq:cbf} holds. In fact, if the \textit{relative degree} $m_\mathrm{r}$ of the system and $b$ is greater than one, then $\lie{g}{b}(x) = 0$ for all $x\in\mathcal{X}$.

\begin{definition}[Relative degree]
The relative degree $m_\mathrm{r}$ of a (sufficiently) differentiable function $b: \mathbb{R}^n \rightarrow \mathbb{R}$ with respect to dynamics \eqref{eq:control affine dynamics} is when $\lie{g}{\liem{f}{b}{m_\mathrm{r}-1}}(\cdot) \neq 0$ but $\lie{g}{\liem{f}{b}{m_\mathrm{r}-2}}(\cdot) = 0$.
\end{definition}

\noindent When $m_\mathrm{r} > 1$, the standard CBF formulation (Definition~\ref{def:cbf}) cannot be used to determine what are safe control actions. Thus we consider HOCBFs \cite{XiaoBelta2021} instead which addresses this issue, and we show later that HOCBFs are a more natural paradigm to encode obstacle sets.

\begin{definition}[High-order Control Barrier Function  \cite{XiaoBelta2021}]
Given a system \eqref{eq:control affine dynamics}, and an $m^{\mathrm{th}}$ order differentiable function $b: \mathcal{X} \rightarrow \mathbb{R}$. Let $m_\mathrm{r}\leq m$ be the relative degree of $b$ with respect to the dynamics. Define a sequence of functions $\psi_i(x) = \dot{\psi}_{i-1}(x) + \alpha_i(\psi_{i-1}(x)),\, i=1,...,m_\mathrm{r}, \, \, \psi_0(x) = b(x)$.

Then $b$ is a candidate high order control barrier function (HOCBF) of relative degree $m_\mathrm{r}$ for system \eqref{eq:control affine dynamics} if there exists differentiable extended class $\mathcal{K}$ functions $\alpha_i,\,i=1,...,m_\mathrm{r}$ such that

{\footnotesize
\begin{equation}
    \sup_{u\in \mathcal{U}}  \underbrace{\left[ \liem{f}{b}{m_\mathrm{r}}(x) + \lie{g}{\liem{f}{b}{m_\mathrm{r}-1}}(x)u + \mathcal{O}(b(x)) +\alpha_{m_\mathrm{r}}(\psi_{{m_\mathrm{r}}-1}(x))\right]}_{\psi_{m_\mathrm{r}}(x)}  \geq 0, 
    \label{eq:hocbf}
\end{equation}
}
\noindent where $\mathcal{O}(b(x)) = \sum_{i=1}^{{m_\mathrm{r}}-1} \liem{f}{(\alpha_{{m_\mathrm{r}}-i} \circ \psi_{{m_\mathrm{r}}-i-1}}{i})(x)$.
\label{def:hocbf}
\end{definition}

\begin{theorem}[\cite{XiaoBelta2021}, Theorem 4]
Given a valid HOCBF $b(x)$ for a system \eqref{eq:control affine dynamics} and sets $\mathcal{C}_i,\, i=1,...,m_\mathrm{r}$, $\mathcal{C}_i = \{ x \in \mathcal{X} \mid \psi_{i-1}(x) \geq 0 \}$. Then any Lipschitz controller $u\in K_\hocbf(x)$ renders the set $\cap_{i}^m \mathcal{C}_i$ forward invariant, where
\begin{eqnarray}
\small
K_\hocbf(x) = \{u\in \mathcal{U} \mid \liem{f}{b}{m_\mathrm{r}}(x) + \lie{g}{\liem{f}{b}{m_\mathrm{r}-1}}(x)u + \notag\\
\mathcal{O}(b(x)) +\alpha_{m_\mathrm{r}}(\psi_{{m_\mathrm{r}}-1}(x)) \geq 0\}.
\label{eq:hocbf control set}
\end{eqnarray}
\normalsize
\end{theorem}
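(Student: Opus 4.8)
The plan is to prove forward invariance by showing that if the trajectory starts in $\cap_{i=1}^{m_\mathrm{r}} \mathcal{C}_i$, then every $\psi_{i-1}(x(t))$ stays nonnegative for all $t \geq 0$, since by definition $x(t) \in \mathcal{C}_i$ is precisely the condition $\psi_{i-1}(x(t)) \geq 0$. Since $u \in K_\hocbf(x)$ is a Lipschitz feedback, the closed-loop field $f(x) + g(x)u$ is locally Lipschitz, so trajectories exist and are unique; I would first record this to justify that each $t \mapsto \psi_i(x(t))$ is well-defined and differentiable along trajectories, using that $b$ is $m^{\mathrm{th}}$-order differentiable and the $\alpha_i$ are differentiable extended class $\mathcal{K}$. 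The key structural observation is that, along a trajectory, the constraint defining $K_\hocbf(x)$ is exactly the statement $\psi_{m_\mathrm{r}}(x(t)) \geq 0$: unrolling the recursion $\psi_i = \dot\psi_{i-1} + \alpha_i(\psi_{i-1})$ and expanding $\dot\psi_{m_\mathrm{r}-1}(x) = \nabla\psi_{m_\mathrm{r}-1}(x)^\top(f(x)+g(x)u)$ reproduces the Lie-derivative expression $\liem{f}{b}{m_\mathrm{r}}(x) + \lie{g}{\liem{f}{b}{m_\mathrm{r}-1}}(x)u + \mathcal{O}(b(x)) + \alpha_{m_\mathrm{r}}(\psi_{m_\mathrm{r}-1}(x))$ appearing in \eqref{eq:hocbf control set}.

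The heart of the argument is a comparison-lemma step applied recursively from the top level downward. From $\psi_{m_\mathrm{r}}(x(t)) \geq 0$ and the definition $\psi_{m_\mathrm{r}} = \dot\psi_{m_\mathrm{r}-1} + \alpha_{m_\mathrm{r}}(\psi_{m_\mathrm{r}-1})$, I obtain the scalar differential inequality $\dot\psi_{m_\mathrm{r}-1}(x(t)) \geq -\alpha_{m_\mathrm{r}}(\psi_{m_\mathrm{r}-1}(x(t)))$. I would then invoke the comparison lemma (equivalently, a Nagumo-type barrier argument): the scalar map $p(t) := \psi_{m_\mathrm{r}-1}(x(t))$ is bounded below by the solution $y(t)$ of $\dot y = -\alpha_{m_\mathrm{r}}(y)$ with $y(0) = p(0) \geq 0$. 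Because $\alpha_{m_\mathrm{r}}$ is extended class $\mathcal{K}$ with $\alpha_{m_\mathrm{r}}(0) = 0$, the origin is an equilibrium of this comparison ODE, so by uniqueness no trajectory starting at $y(0) \geq 0$ crosses zero; hence $y(t) \geq 0$ and therefore $\psi_{m_\mathrm{r}-1}(x(t)) \geq 0$ for all $t \geq 0$.

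I would then cascade this downward by backward induction on the index. Having shown $\psi_{j}(x(t)) \geq 0$ for all $t$, the definition $\psi_{j} = \dot\psi_{j-1} + \alpha_{j}(\psi_{j-1})$ gives $\dot\psi_{j-1}(x(t)) \geq -\alpha_{j}(\psi_{j-1}(x(t)))$, and the same comparison argument, using $\psi_{j-1}(x(0)) \geq 0$ (which holds because $x(0) \in \cap_i \mathcal{C}_i$), yields $\psi_{j-1}(x(t)) \geq 0$ for all $t$. Iterating from $j = m_\mathrm{r}-1$ down to $j = 1$ establishes $\psi_{i-1}(x(t)) \geq 0$ for every $i \in \{1,\dots,m_\mathrm{r}\}$, i.e. $x(t) \in \mathcal{C}_i$ for every $i$, which is exactly forward invariance of $\cap_{i=1}^{m_\mathrm{r}} \mathcal{C}_i$.

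The main obstacle I anticipate is making the comparison step fully rigorous rather than merely intuitive: one must confirm that $t \mapsto \psi_{i-1}(x(t))$ is (locally Lipschitz, hence) absolutely continuous so the differential inequality holds almost everywhere and the comparison lemma applies, that the $\alpha_i$ are locally Lipschitz so the comparison ODE has a unique solution, and that $u \in K_\hocbf(x)$ being Lipschitz guarantees forward completeness of trajectories so $t$ may range over $[0,\infty)$. A secondary point of care is the ordering of the induction: nonnegativity of the inner functions is not assumed a priori but earned level by level, with each level's conclusion upgrading the next differential inequality into one whose class-$\mathcal{K}$ argument is guaranteed nonnegative. The remaining algebra, verifying that the unrolled recursion matches the $\mathcal{O}(b(x))$ term in Definition~\ref{def:hocbf}, is routine Lie-derivative bookkeeping that I would relegate to a remark.
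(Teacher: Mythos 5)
Your argument is correct and is essentially the standard proof of this result: the paper itself offers no proof (it imports the statement verbatim as Theorem~4 of \cite{XiaoBelta2021}), and the cited source establishes it exactly as you do, by observing that $u\in K_\hocbf(x)$ enforces $\psi_{m_\mathrm{r}}(x(t))\geq 0$ and then cascading the comparison-lemma/CBF forward-invariance step down through $\psi_{m_\mathrm{r}-1},\dots,\psi_0=b$, using $x(0)\in\cap_i\mathcal{C}_i$ to seed each level. The technical caveats you flag (absolute continuity of $t\mapsto\psi_{i-1}(x(t))$, local Lipschitzness of the $\alpha_i$ for uniqueness of the comparison ODE) are the same ones handled, or glossed over, in the original reference, so nothing further is needed.
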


\begin{definition}[Effective CBF]
Given a HOCBF $b$ and corresponding $\alpha_i,\, i=1,...,m_\mathrm{r}$, we say $\psi_{m_\mathrm{r}-1}$ is the ``effective'' CBF since \eqref{eq:hocbf} is equivalent to \eqref{eq:cbf} with $b(x) = \psi_{m_{\mathrm{rel}-1}}(x)$.
\label{def:effective cbf}
\end{definition}

In short, given system dynamics and an unsafe set, HOCBFs can describe control invariant sets. The size and shape of a control invariant set is influenced by the $\alpha_i$ functions which imposes a bound on the allowable rate at which the system is allowed to approach the unsafe set.

\subsection{State-dependent Control Set Learning Procedure}
\label{subsec:hocbf learning}
In this section, we detail our state-dependent control set learning procedure. Roughly, we use HOCBF as inductive bias in learning a mapping from state to control set. 

\subsubsection{Learning set-up and assumptions} We make the assumption that (safe and reasonable) drivers do not put themselves in situations where it is not possible for them to avoid a collision. That is, drivers execute controls that ensure they continually stay inside a ``safe set'' because leaving that set may lead to collision. For example, (typical) safe drivers avoid approaching obstacles at high speeds because they may not be able to steer and slow down to avoid the obstacle. This so-called ``safe-set'' is precisely a control invariant set (see Definition~\ref{def:control invariant set}). 
Thus, given a dataset of \textit{safe} collision-free trajectories, the goal is to learn a control invariant set and corresponding control sets that contain the observed samples. We represent the control invariant set using HOCBFs \cite[Theorem 4]{XiaoBelta2021} which provides inductive bias that help make the learning process tractable and a direct and interpretable way to obtain state-dependent control sets (see \eqref{eq:hocbf control set}) amenable for safety concept synthesis.

For collision avoidance, agents are aiming to avoid the \textit{same} obstacle set but only differ in \textit{how} they avoid it. 
Thus we use HOCBFs to avoid reasoning about obstacle sets in velocity states (or higher derivatives) which is difficult to construct, and instead reason about $\alpha_i$'s.

\subsubsection{Illustrative example}
\begin{figure}[t]
    \centering
    \includegraphics[width=0.5\textwidth]{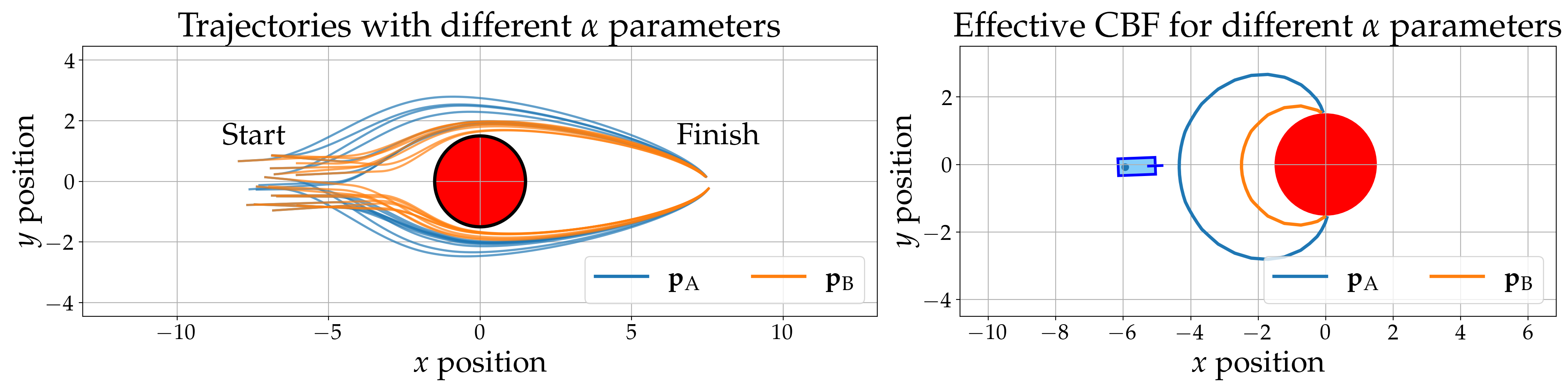}
    \caption{Two different $\alpha_i$ functions parameterized by different parameters, $\mathbf{p}_\mathrm{A}$ and $\mathbf{p}_\mathrm{B}$, result in different trajectory behaviors and effective CBFs. Left: More ``aggressive'' behaviors are observed with using $\mathbf{p}_\mathrm{B}$ compared to $\mathbf{p}_\mathrm{A}$. Right: The ground truth effective CBF each set of trajectories.}
    \label{fig:hocbf illustration}
\end{figure}
Figure~\ref{fig:hocbf illustration} illustrates how different parameters for the HOCBF $\alpha_i$ functions result in different collision avoidance behaviors, thereby motivating our goal of learning HOCBFs, specifically parameters for $\alpha_i$, from data.
Suppose a car needs to avoid a circular obstacle centered at $x_\mathrm{ob}$ with radius $r_\mathrm{ob}$ while reaching a goal state $x_\mathrm{goal}$. We use a HOCBF $b(x) = \|x - x_\mathrm{ob}\|^2_2 - r_\mathrm{ob}^2$ to represent the obstacle (i.e., the unsafe set). 
Figure~\ref{fig:hocbf illustration} illustrates two sets of trajectories generated from the HOCBF-CLF planner described in \cite{XiaoBelta2021}, each set constructed using different parameters, $\mathbf{p}_\mathrm{A}$ and $\mathbf{p}_\mathrm{B}$ for $\alpha_i$. We see that $\mathbf{p}_\mathrm{A}$ (blue) results in more conservative behaviors than $\mathbf{p}_\mathrm{B}$ (orange) since the blue trajectories starts turning away from the obstacle earlier. Correspondingly, we see the zero sub-level set of the effective CBF with $\mathbf{p}_\mathrm{A}$ is larger than the one for $\mathbf{p}_\mathrm{B}$.

\subsubsection{HOCBF learning algorithm} Next, we present our HOCBF learning algorithm whereby we aim to learn the parameters for $\alpha_i$ from data. Let $\dataset = \{ (x^{(i)}, u^{(i)}) \}_{i=1}^{N}$ be a dataset containing states and controls collected from humans operating in the target environment exhibiting \textit{safe} collision avoidance behaviors.
Since negative examples are rare and their explicit collection is impractical, one of the key strengths of our method is its ability to learn control invariant sets from just positive examples. A;though our method can also consider negative examples if available.

Suppose we have dynamics given by \eqref{eq:control affine dynamics}. Let $\mathcal{C}$ describe the set of obstacle states in position space (we consider $\mathbb{R}^2$ but this analysis can extend to $\mathbb{R}^3$). Let $b: \mathcal{X} \rightarrow \mathbb{R}$ be a function where $b(x) \leq 0 \Leftrightarrow \mathrm{pos}(x) \in \mathcal{C}$, and $\mathrm{pos}:\mathcal{X} \rightarrow \mathbb{R}^2$ is a function that extracts the position states. Let $m_\mathrm{r}\geq1$ be the relative degree of \eqref{eq:control affine dynamics} and $b$.
We seek to find $\alpha_i, \, i=1,...,m_\mathrm{r}$ such that $u^{(j)} \in K_\hocbf(x^{(j)})$ for all $(x^{(j)}, u^{(j)}) \in \dataset$.
More concretely, let $\alpha_i^{p_i}$ indicate that $\alpha_i$ is parameterized by a vector of parameters $p_i$, and let $\mathbf{p} = [p_1,...,p_{m_\mathrm{r}}]$. 
Considering that (i) $\alpha_i^{p_i}$ can be chosen to be arbitrarily large (i.e., steep) which will trivially lead to \eqref{eq:hocbf control set} being true for all datapoints, (ii) we want the HOCBF control constraint \eqref{eq:hocbf control set} to be satisfied, (iii) the states should be inside the (safe) control invariant set, and (iv) the dataset could contain outliers or noise, we seek to find $\mathbf{p}^\star$ such that,
\begin{equation}
\begin{split}
    \mathbf{p}^\star =& \arg\min_{\mathbf{p}} \frac{1}{N}\sum_{j=1}^N\underbrace{-\beta_1\min\{G_x^ju^{(j)} + F_x^j(\mathbf{p}), 0\}}^{\text{\eqref{eq:hocbf control set} violation}} + \\
    &\qquad \qquad \qquad \beta_2 \underbrace{\max\{\tanh(G_x^ju^{(j)} + F_x^j(\mathbf{p})), 0\}}^{\text{\eqref{eq:hocbf control set} satisfaction (saturated)}} + \\
    & \qquad \qquad \qquad \beta_3( \underbrace{-\min\{ \psi_{m_\mathrm{r}-1}(x^{(j)};\mathbf{p}), 0\}}_{\text{Effective CBF violation}} +\\
    &\qquad \qquad \qquad \beta_4 \underbrace{\max\{ \tanh(\psi_{m_\mathrm{r}-1}(x^{(j)};\mathbf{p})), 0\}}_{\text{Effective CBF satisfaction (saturated)}} + \\
    & \qquad \qquad \qquad \beta_5 \underbrace{\|\mathbf{p}\|_2^2}_{\text{$\alpha_i$ regularization}}
    \label{eq:hocbf learning}
\end{split}
\end{equation}
where $G_x^j = \lie{g}{\liem{f}{b}{m_\mathrm{r}-1}}(x^{(j)})$, $F_x^j(\mathbf{p}) =\liem{f}{b}{m_\mathrm{r}}(x^{(j)})+ \mathcal{O}(b(x^{(j)});\mathbf{p}) +\alpha_{m_\mathrm{r}}^{p_{m_\mathrm{r}}}(\psi_{{m_\mathrm{r}}-1}(x^{(j)}))$, and $\mathcal{O}(b(x);\mathbf{p}) = \sum_{i=1}^{{m_\mathrm{r}}-1} \liem{f}{(\alpha_{{m_\mathrm{r}}-i}^{p_{m_\mathrm{r}}-i} \circ \psi_{{m_\mathrm{r}}-i-1}}{i})(x)$ (c.f., \eqref{eq:hocbf control set}). 
We highlight some important considerations for this optimization problem.

\noindent{\it Saturation:} We apply $\tanh$ on the second and fourth terms in \eqref{eq:hocbf learning} because while we desire all the data points to satisfy the HOCBF conditions, we are not too interested in \textit{how much} each data point satisfies them. That is, we want to reduce the influence of extremely safe points (e.g., states very far and moving away from the obstacle).

\noindent{\it Parameterization:} The $\alpha_i, \, i=1,...,m_\mathrm{r}$ functions must belong to extended class $\mathcal{K}_\infty$. As such, we can parameterize $\alpha_i$ as a linear combination of extended class $\mathcal{K}_\infty$ basis functions. In future work, we can consider using a monotonic neural network \cite{Sill1997,WehenkelLouppe2019}.

\noindent{\it Pairwise joint dynamics:} In the case with pairwise joint dynamics, as presented in \eqref{eq:pairwise dynamics}, there is an additional term corresponding to the contender's input (i.e., a disturbance term), and ultimately, it will show up in \eqref{eq:hocbf learning}. 
In particular, the HOCBF constraint becomes
\begin{equation}
\begin{split}
    &\underbrace{\lie{g}{\liem{f}{b}{m_\mathrm{r}-1}}(\jointstate)\uA}_{\text{Ego's influence}} + \underbrace{\lie{h}{\liem{f}{b}{m_\mathrm{r}-1}}(\jointstate)\uB}_{\text{Contender's influence}} + \\
    & \qquad \liem{f}{b}{m_\mathrm{r}}(\jointstate)+ \mathcal{O}(b(\jointstate);\mathbf{p}) +\alpha_{m_\mathrm{r}}^{p_{m_\mathrm{r}}}(\psi_{{m_\mathrm{r}}-1}(\jointstate)) \geq 0
\end{split}
    \label{eq:hocbf control disturbance constraint}
\end{equation}
which is affine in $\uA$ and $\uB$.
To select a value for $\uB$, depending on what information is available, we could either (i) use the ground truth value, (ii) assume the worst-case, or (iii) \textit{predict} a distribution over agent $B$'s controls and take the worst-case with respect to the predictions.

\noindent{\it Differentiability:} If the dynamics and $b$ are differentiable, then \eqref{eq:hocbf learning} can be optimized via standard gradient descent algorithms. In this work, we use JAX \cite{JAX2018}, a Python software library for automatic differentiation.

\noindent{\it Negative examples:} If negative examples are available (e.g., from real-world driving logs or synthetically generated \cite{ChenDathathriEtAl2019}), we can include additional terms to \eqref{eq:hocbf learning} that encourage the negative examples to violate the control constraint and have negative effective CBF values. The negative examples will help refine the delineation of the control invariant set.


\section{Safety Concept Synthesis via Constrained HJ Reachability}
\label{sec:data-driven safety}

We incorporate the learned HOCBF into the HJ reachability formulation to synthesize a safety concept that assumes \textit{reasonable} human behaviors in safety-critical situations. 
Directly adding the learned HOCBF control constraint \eqref{eq:hocbf control disturbance constraint} to \eqref{eq:hji pde} places the burden of constraint satisfaction on the agent that acts second (i.e., agent $B$), thus imposing a strong assumption that the other (uncontrolled) agents will always behave responsibly with respect to HOCBF constraint satisfaction. To address this, we swap the ordering in which the agents act, and the following proposition states that this swapping results in a more conservative HJ value function, i.e., it is more advantageous for a player to act first.

\begin{proposition}
Consider a coupled affine constraint $p(\uA, \uB) := a\uA + b\uB + c \geq 0$ in $\uA$ and $\uB$, and a linear objective $q(\uA, \uB)=A\uA + B\uB$. Let $\UAz = \{ \uA\in\UA \mid \exists \uB\in\UB, \, p(\uA,\uB) \geq 0\}$ and $\UBz = \{ \uB\in\UB \mid \exists \uA\in\UA, p(\uA,\uB) \geq 0\}$ represent control sets for each agent which ensures the other agent can satisfy the constraint $p(\uA, \uB) \geq 0$. Let $\UA(\uB) = \{ \uA\in\UA \mid p(\uA,\uB) \geq 0\}$, and $\UB(\uA) = \{ \uB\in\UB \mid p(\uA,\uB) \geq 0\}$ describe an agent's feasible control set with the other agent's control fixed. Then, 
\begin{equation}
    \max_{\uA \in \UAz} \min_{\uB \in \UB(\uA)} q(\uA, \uB) \geq  \min_{\uB \in \UBz} \max_{\uA \in \UA(\uB)} q(\uA, \uB).
    \label{eq:minimax constrained inequality}
\end{equation}
\label{prop:minimax constrained inequality}
That is, the player that acts first has the advantage assuming the second player is provided feasible options.
\end{proposition}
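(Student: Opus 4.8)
The plan is to exploit two special features of the problem: the objective $q(\uA,\uB)=A\uA+B\uB$ is \emph{separable} (linear in each player's control), and the coupling $p(\uA,\uB)=a\uA+b\uB+c\ge 0$ is a \emph{single} affine constraint. I would first reduce the (possibly high-dimensional) controls to scalars by pushing them through the only two linear functionals that matter. Mapping $\uA\mapsto(s,w):=(a\uA,\,A\uA)$ and $\uB\mapsto(t,z):=(b\uB,\,B\uB)$ sends the compact convex sets $\UA,\UB$ to compact convex images; writing $\bar w(s)=\max\{w:(s,w)\in\mathrm{image}\}$ and $\underline z(t)=\min\{z:(t,z)\in\mathrm{image}\}$ gives a concave upper boundary $\bar w$ and a convex lower boundary $\underline z$. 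Because both $\UA(\uB)$ and $\UB(\uA)$ depend on the opponent only through $s$ or $t$, each inner optimization decouples and the two sides collapse to the scalar problems $L=\max_s[\bar w(s)+\rho(-c-s)]$ and $R=\min_t[\underline z(t)+\sigma(-c-t)]$, where $\rho(\gamma)=\min_{t\ge\gamma}\underline z(t)$ and $\sigma(\delta)=\max_{s\ge\delta}\bar w(s)$ are the second mover's value functions, and the constraint is simply $s+t\ge -c$.

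Next I would characterize the second mover's best response, which is the crux of the calculation. Minimizing the convex $\underline z$ (resp. maximizing the concave $\bar w$) subject to one linear inequality has a piecewise solution: the responder either sits at its unconstrained optimizer $t^\star=\arg\min\underline z$ (resp. $s^\star=\arg\max\bar w$) when that point is feasible, or is pushed onto the boundary $s+t=-c$ when it is not. Substituting these responses expresses $f(s):=\bar w(s)+\rho(-c-s)$ and $\psi(t):=\underline z(t)+\sigma(-c-t)$ explicitly, splitting the feasible range of the first mover into an ``inactive-constraint'' piece and an ``active-constraint'' piece on which the payoff equals the single boundary function $H(s):=\bar w(s)+\underline z(-c-s)$.

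With this in hand the inequality follows by a two-regime argument. If the unconstrained separable optimum $(s^\star,t^\star)$ is jointly feasible, i.e. $s^\star+t^\star\ge -c$, then both $s^\star$ and $t^\star$ are feasible moves: the maximizer can guarantee the unconstrained value $M:=\bar w(s^\star)+\underline z(t^\star)$ by playing $s^\star$ (so $L\ge M$), while the minimizer can cap the value at $M$ by playing $t^\star$ (so $R\le M$), giving $L\ge M\ge R$. If instead $s^\star+t^\star<-c$, the constraint is active at both optima, so $L=\max_{s\le -c-t^\star}H(s)$ and $R=\min_{s\ge s^\star}H(s)$ are the max and the min of the \emph{same} function $H$ over two intervals; since $s^\star<-c-t^\star$, these intervals overlap, and evaluating $H$ at any common point $s'$ yields $L\ge H(s')\ge R$.

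The main obstacle I anticipate is the feasibility bookkeeping rather than any single inequality. The projected sets $\UAz,\UBz$ (equivalently the thresholds that keep the opponent's response set nonempty) must be tracked carefully, and precisely because the unconstrained optimizers $s^\star,t^\star$ can themselves be infeasible, one cannot use a single witness point throughout---this is what forces the split into the active/inactive regimes and requires verifying the interval overlap, which follows from the assumed existence of a jointly feasible control pair. It is worth emphasizing that this result \emph{reverses} the usual $\max\min\le\min\max$ ordering: separability of $q$ alone would make the two orders equal, so the entire gap, and hence the first-mover advantage, is produced by the coupling constraint being borne by whichever agent acts second.
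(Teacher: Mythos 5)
Your argument is correct in substance but follows a genuinely different---and considerably heavier---route than the paper's. The paper's proof is a three-line sandwich: since $\UB(\uA)\subseteq\UBz$ for every $\uA\in\UAz$, relaxing the inner minimization to the decoupled set $\UBz$ can only lower the left-hand side; the resulting game over the product set $\UAz\times\UBz$ has equal $\max\min$ and $\min\max$ (the paper invokes the minimax theorem, though, as you observe, separability of $q$ over a product set already gives equality); and restricting the inner maximization from $\UAz$ to $\UA(\uB)\subseteq\UAz$ can only lower the right-hand side further, so both constrained games are pinned against the same decoupled game. No structure of the sets beyond these inclusions is used. Your scalar reduction through $(a\uA, A\uA)$ and $(b\uB, B\uB)$, the best-response characterization, and the active/inactive case split all check out (the interval-overlap step in the active regime does follow from joint feasibility as you claim), but they purchase correctness at the price of hypotheses the proposition never states: you need $\UA$ and $\UB$ compact and convex so that $\bar{w}$ is concave and $\underline{z}$ is convex, which is what forces the constrained responder onto the boundary $s+t=-c$; the paper's set-inclusion argument needs none of this and works for arbitrary control sets. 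What your approach buys in exchange is a sharper picture: it locates the optimizers, identifies exactly when the coupling constraint is active, and substantiates your closing observation that the entire first-mover gap is created by the constraint being borne by the second mover rather than by the (separable) objective.
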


\begin{proof}
\begin{eqnarray*}
&&\max_{\uA \in \UAz} \min_{\uB \in \UB(\uA)} q(\uA, \uB) \\
&\geq& \max_{\uA \in \UAz} \min_{\uB \in \UBz} q(\uA, \uB) \quad {\small(\UB(\uA) \subseteq \UBz)}\\
&=& \min_{\uB \in \UBz} \max_{\uA \in \UAz}  q(\uA, \uB) \quad {\small(\text{Minimax theorem})}\\
&\geq& \min_{\uB \in \UBz} \max_{\uA \in \UA(\uB)}  q(\uA, \uB)  \quad {\small (\UA(\uB) \subseteq \UAz)}
\end{eqnarray*}
\end{proof}

Proposition~\ref{prop:minimax constrained inequality} is contrary to the well-known minimax inequality for the unconstrained case where the player acting second has the advantage \cite{Neumann1927}. 
Since we assume control-disturbance-affine dynamics, both the HOCBF constraint from \eqref{eq:hocbf control set} and $\nabla \hjvalue(\jointstate,t)^Tf(\jointstate, \uA, \uB)$ term from \eqref{eq:hji pde} are linear in control input, we can apply Proposition~\ref{prop:minimax constrained inequality} and hence instead consider a \textit{constrained} HJ reachability problem,
\vspace{2mm}
\begin{equation}
\begin{split}
    &\textbf{Constrained HJ reachability}\\
    &\min_{\uB \in  \UBz}  \max_{\uA \in \UA(\uB)}\nabla_{\jointstate} \hjvalue(\jointstate, t)^\top \left[ f(\jointstate) + g(\jointstate)\uA + h(\jointstate)\uB\right]\\
    &\quad  \text{such that} \: \eqref{eq:hocbf control disturbance constraint} \:\text{holds}.
   \label{eq:hji with cbf constraint}
\end{split}
\end{equation}
Instead of assuming the contender will follow a worst-case policy (the standard HJ reachability setup), \eqref{eq:hji with cbf constraint} computes the worst-case \textit{responsible} contender policy, where we consider responsible behaviors as those that ensure HOCBF control constraint satisfaction.
This constrained formulation assumes that contender (agent $B$) will act first and choose a control that ensures there exists a feasible control for the ego (agent $A$) to choose from. Thus the ego bears the burden of constraint satisfaction. Future work includes investigating ways to divide collision avoidance responsibility.


\section{Experiments and Discussion}
\label{sec:results}
We present two examples, a simple static obstacle collision avoidance example to aid understanding of our method, and a highway driving scenario studied in \cite{SchmerlingLeungEtAl2018} to illustrate the synthesis of a data-driven safety concept for AV applications. We note that our method can be applied to other settings, such as car-pedestrian interactions, and social navigation. We plan on releasing our code upon publication.

\subsection{Collision avoidance with circular obstacle}

\begin{figure}[t]
     \centering
     \subfloat[Effective CBFs.]{\includegraphics[width=0.16\textwidth]{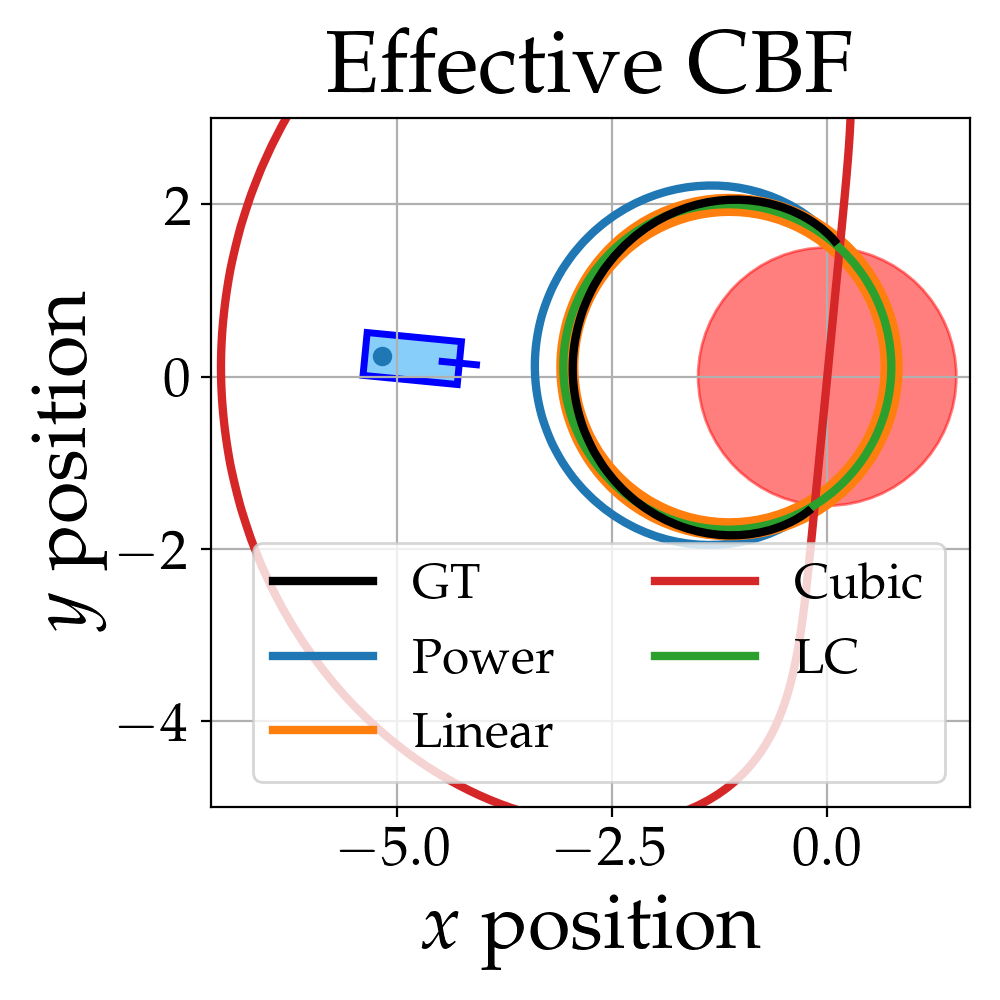}\label{fig:toy alpha parameterization comparison}}
     \subfloat[Synthesized  trajectories.]{\includegraphics[width=0.32\textwidth]{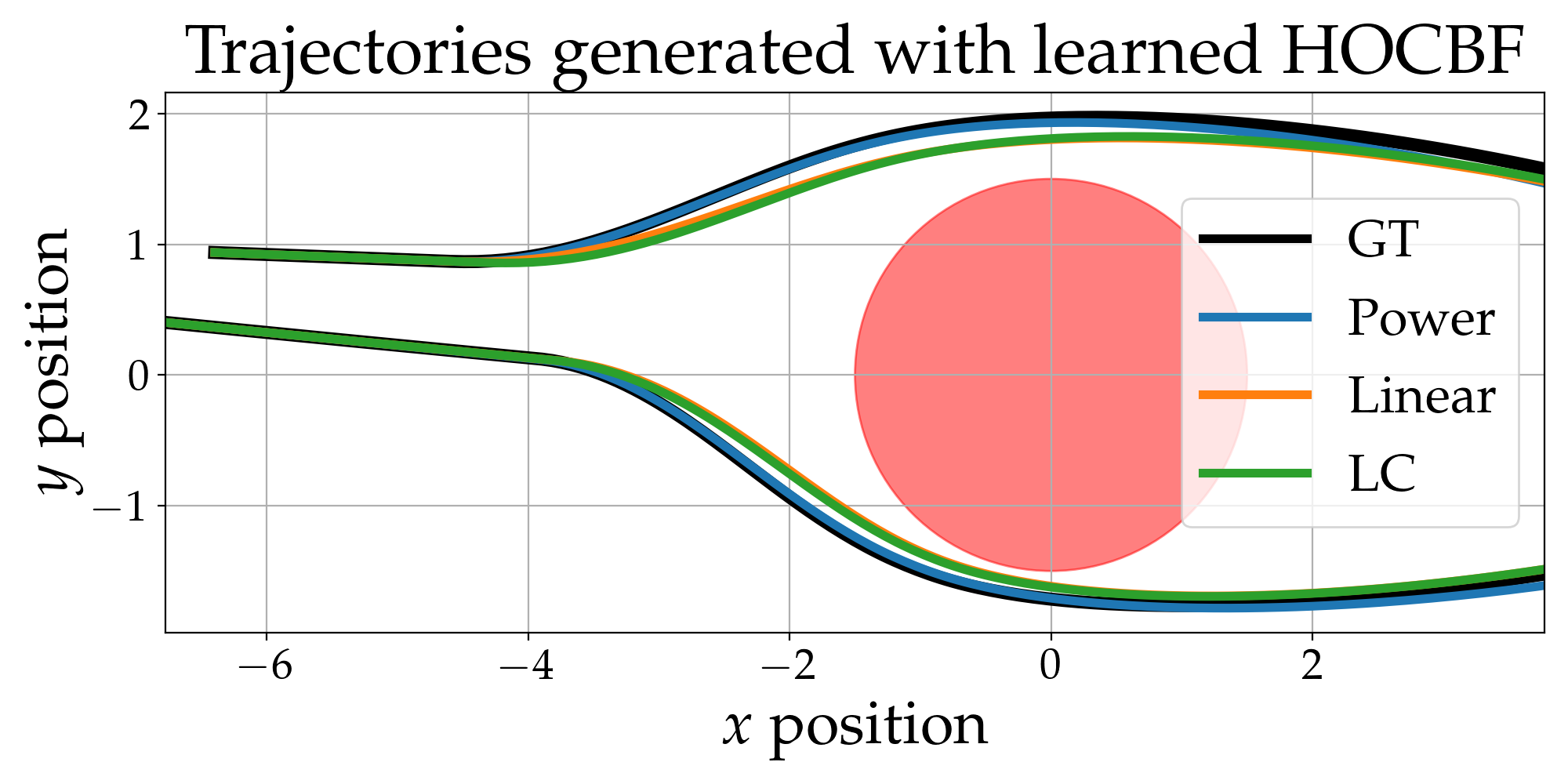}\label{fig:toy generated traj from learned hocbf}}
     \caption{Comparison of various learned HOCBFs for collision avoidance around a circular obstacle.}
     \label{fig:toy example results}
\end{figure}
We continue with the example discussed in Section~\ref{subsec:hocbf learning}.
Consider a 4-state simple car model,
\begin{equation}
    [\dot{x}, \dot{y}, \dot{\theta}, \dot{v}] = [v\cos\theta , v\sin\theta , \frac{v}{\ell}\tan\delta , a]
    \label{eq:simple car}
\end{equation}
with $(\delta,a)$ as steering and acceleration control inputs, and $\ell$ is the wheelbase length of the car. 
For the HOCBF learning, we use the following parametric form $\alpha_i^{p_i}(a) = p_{i,1}a^{p_{i,2}}$, $p_1 = [0.54, 1.16],\, p_2=[0.68, 1.11]$ to generate a set of trajectories with various random initial conditions. The parameter values serves as the ground truth (GT).
We applied the HOCBF learning procedure using various parameterization of $\alpha_i^{p_i}$: power function (same parameterization as GT), linear function $\alpha_i^{p_i} = p_ia$, cubic function $\alpha_i^{p_i}=p_ia^3$, and a linear combination (LC) of a linear, cubic, and $\tanh$ function $\alpha_i^{p_i}(x) = p_{i,1}x + p_{i,2}\tanh(p_{i,3}x) + p_{i,4}x^3$ where $p_i=[p_{i,1}, p_{i,2}, p_{i,3}, p_{i,4}]$, all with $i=1,2$. 

When learning the parameters of $\alpha$ (i.e., performing gradient descent on \eqref{eq:hocbf learning}), we used $\beta_1 = 1$, $\beta_2 = \beta_4 = 0.001$, $\beta_3 = 1$, $\beta_5 =0.001$, step size / learning rate = $0.001$, and 150000 gradient steps (though it converged much earlier).
Figure~\ref{fig:toy example results} visualizes the zero level set of the learned effective CBF (left) and resulting trajectory from running the same reach-avoid planner used to generate the dataset but with the different learned $\alpha_i$ parameterizations (right). We see that the learned HOCBFs except the cubic parameterization is able to closely recover the ground truth behavior (black line).

\subsection{Highway driving}
In this example, we consider a highway driving scenario, synthesize a novel safety concept based on a highway driving dataset, and evaluate on real traffic data.

\subsubsection{Data collection} 
\begin{figure}[t]
    \centering
    \includegraphics[width=0.48\textwidth]{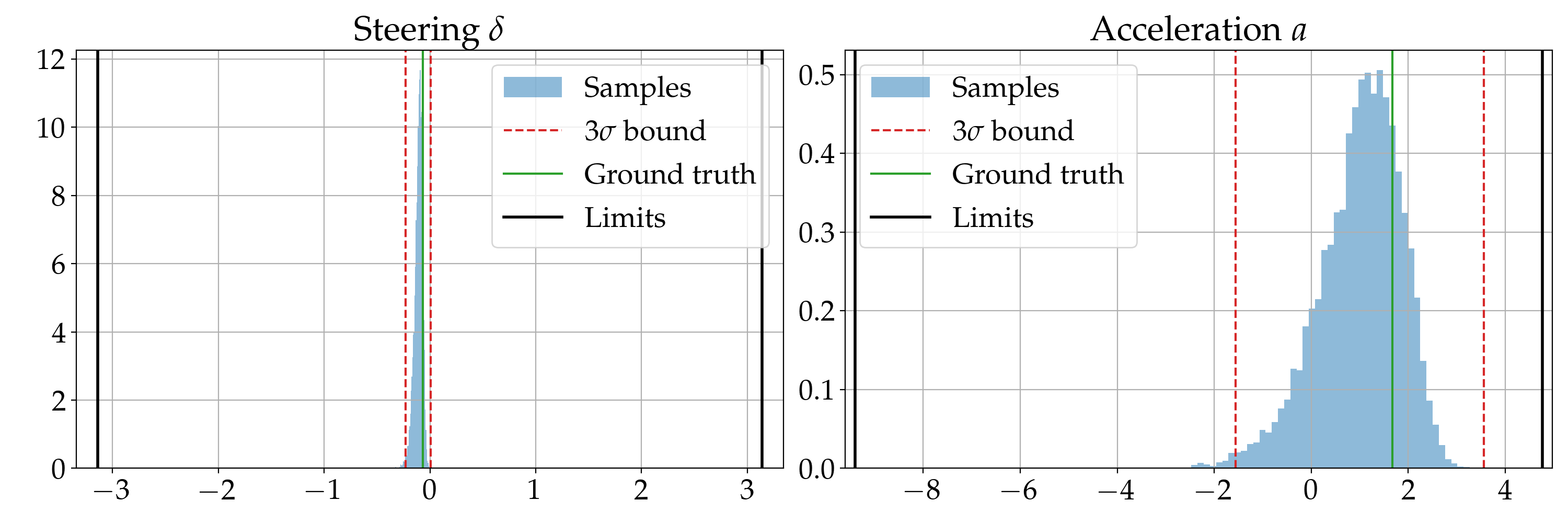}
    \caption{Comparison between the ground truth control (green), distribution of predicted control (blue), three standard deviation bounds of the prediction distribution (red), and upper/lower control limits (black).}
    \label{fig:predicted_3std_example}
    \vspace{-3mm}
\end{figure}
We use the traffic-weaving dataset \cite{SchmerlingLeungEtAl2018}, and assume the control inputs of the other cars are not observable (this is often true in real-world settings). Instead, we learn a generative model to construct distributions over likely contender controls (i.e., disturbances) and use them as ground truth for our HOCBF learning procedure. 
We trained a conditional variational autoencoder with a continuous latent space (latent dimension size of 8) with a history length of ten time steps, and prediction horizon of five. Although we predict five time steps into the future, we only consider the first (i.e., current) control of the other agent. We used LSTM cells for the encoder and decoder with hidden dimension 8. Given the prediction, we define $\mathcal{D}_\mathrm{pred}$ as the interval three standard deviations away from the mean, and then compute $\uB \in \mathcal{D}_\mathrm{pred}$ that minimizes the LHS of \eqref{eq:hocbf control disturbance constraint} during the HOCBF learning process. Figure~\ref{fig:predicted_3std_example} shows an example which compares the learned distribution, three standard deviation bound, the ground truth, and control limits. 

\begin{table}[t]
    \centering
    \vspace{2mm}
    \caption{Values of a hyperparameter sweep. The values chosen for the rest of the experiments are shown in \textbf{bold}.}
    \begin{tabular}{|c|c|c|}
    \hline
        \textbf{Parameter}  & \textbf{Values} \\ \hline \hline
        $\beta_1$ & $\lbrace \mathbf{1} \rbrace$ \\ 
        $\beta_2$ & $\lbrace 0.0, \mathbf{0.001}, 0.01, 0.1 \rbrace$\\
        $\beta_3$ &  $\lbrace 0., 0.1, \mathbf{1.0} \rbrace$\\
        $\beta_4$ & (=$\beta_2$)\\
        $\beta_5$ & $\lbrace 0.1, \textbf{0.001} \rbrace $ \\
        $\alpha_i$ & $\lbrace \mathbf{p_1 x},\, p_1 x + p_2\tanh(p_4 x) + p_3 x^3\rbrace $\\
        Learning rate & $\lbrace \mathbf{0.001} \rbrace $\\
        Number of steps & $\lbrace \mathbf{10000} \rbrace$\\\hline
    \end{tabular}
    \label{tab:hyperparameter sweep}
\end{table}

\begin{figure}[t]
     \centering
     \subfloat[Zero level-set of HJ value function evaluated at $\Delta x-\Delta y$ slice. $\vA=18$ms$^{-1}$, $\vB=25$ms$^{-1}$, $\theta_\mathrm{A}=\theta_\mathrm{B}=0$rad. ]{\includegraphics[width=0.47\textwidth]{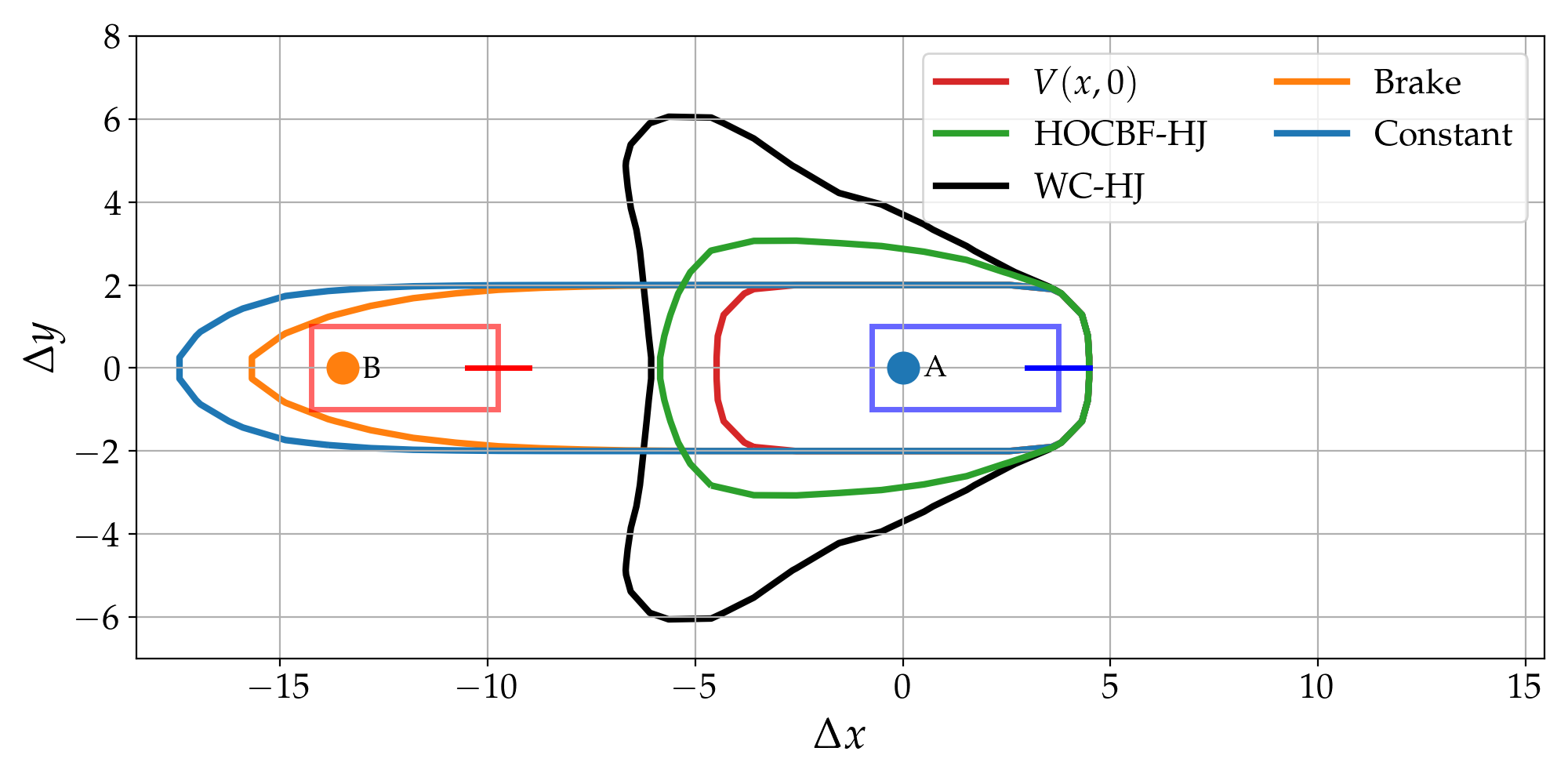}\label{fig:trafficweaving avoid set constrained unconstrained comparison, b behind}} \hspace{3mm}
     \subfloat[HOCBF-HJ and WC-HJ optimal controls for agents $A$ and $B$ when agent $B$ is approaching from behind.]{\includegraphics[width=0.47\textwidth]{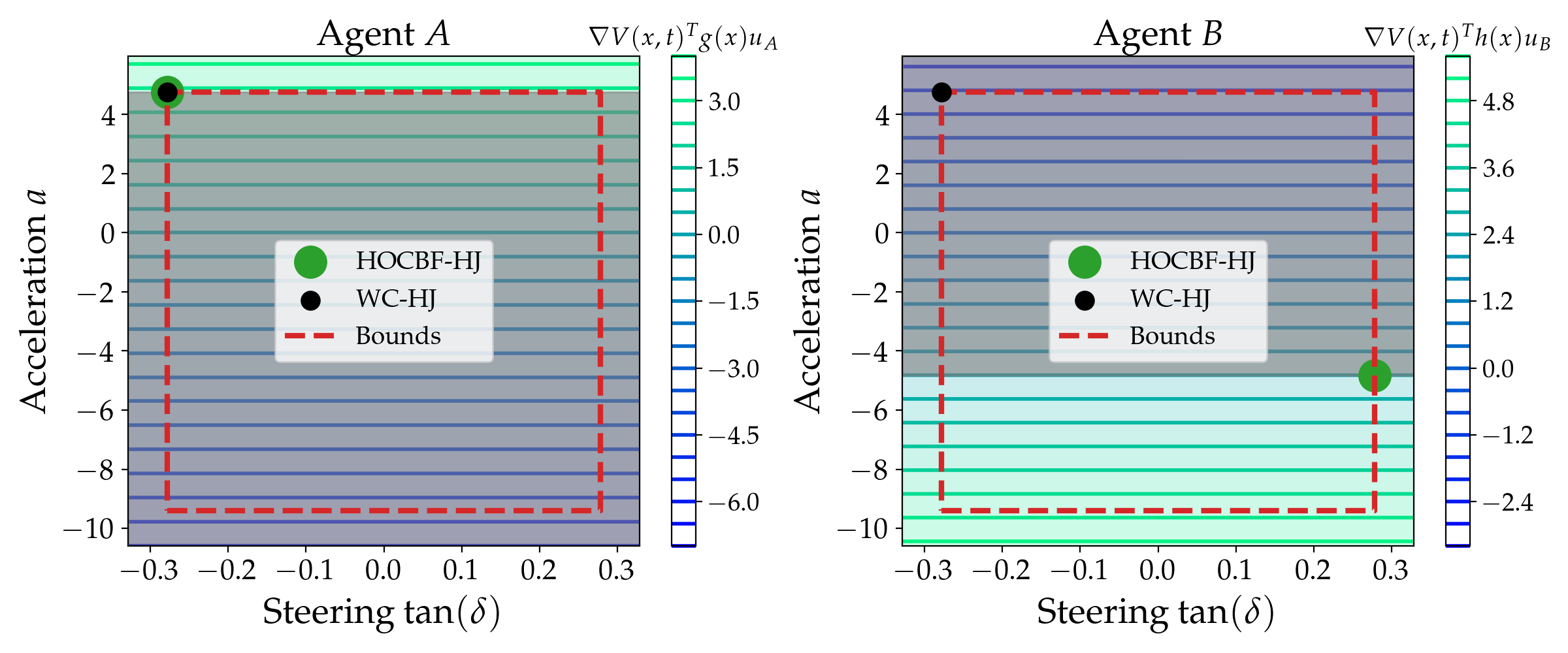}\label{fig:trafficweaving control comparison, b behind}}
     \caption{Safety concept comparison computed over a two second horizon. Left: Level sets of the HJ value function corresponding to different safety concepts. Right: Plots of control sets with infeasible regions according to the HOCBF constraint shaded gray.}
     \label{fig:trafficweaving HOCBF-HJ results}
     \vspace{-4mm}
\end{figure}

\subsubsection{Safety concept analysis} 
\begin{table*}[t]
\centering
\caption{Volumetric ``confusion matrices'' compared against the unconstrained (standard) HJ value function. Values represent the percentage of grid points in state space.}
\label{tab:avoid set volume comparison}
\begin{tabular}{cc|cc|cc|cc|cc}
\toprule
{} & {} & \multicolumn{2}{c|}{\bf WC-HJ} &\multicolumn{2}{c|}{\bf HOCBF-HJ} & \multicolumn{2}{c|}{\bf Brake} & \multicolumn{2}{c}{\bf Constant}\\
{} & {} & Safe & Unsafe & Safe & Unsafe & Safe & Unsafe & Safe & Unsafe \\ \midrule
\multirow{2}{*}{\bf WC-HJ} & Safe & 89.08 & 0 & 88.93 & 0.15 &  86.56 & 2.52 & 86.28  & 2.79\\
& Unsafe & 0 & 10.92 & 1.30 & 9.62   &  1.52 & 9.40   & 1.49  & 9.43\\
\bottomrule
\end{tabular}
\end{table*}

\begin{table}[t]
    \centering
    \caption{HJ value statistics corresponding to various safety concepts evaluated on the NGSIM dataset.}
    \label{tab:trafficweaving ngsim percentile}
    \begin{tabular}{|c|c|ccccc|}
    \hline
        \multirow{2}{*}{\bf Safety Concept} & \multirow{2}{*}{\bf Mean} & \multicolumn{5}{c|}{\bf Percentile}\\
        &  & {\bf 0$^\mathrm{th}$} & {\bf 5$^\mathrm{th}$} & {\bf 50$^\mathrm{th}$} & {\bf 95$^\mathrm{th}$} & {\bf 100$^\mathrm{th}$}  \\ \hline\hline
         {\bf WC-HJ}    & 4.3  & -0.38 & 0.24 & 4.04 & 9.36  & 13.67\\
         {\bf HOCBF-HJ} & 4.42 & -0.37 & 0.23 & 4.17 & 9.54  & 13.62 \\
         {\bf Brake}    & 4.96 &  0.13 & 0.65 & 4.52 & 10.94 & 14.93 \\
         {\bf Constant} & 4.52 & -0.01 & 0.55 & 4.21 & 9.52  & 13.83\\ \hline
    \end{tabular}
    \vspace{-3mm}
\end{table}

We consider a HOCBF $b(\jointstate) =  \frac{\Delta x^2}{a^2}  + \frac{\Delta y^2}{b^2} - 1$, $a = 5.4,\, b = 2.4$, and consider relative dynamics between two vehicles modeled by the simple car model \eqref{eq:simple car}, resulting in a relative degree of two.
We applied the learning procedure outlined in Section~\ref{subsec:hocbf learning}; we performed a hyperparameter sweep over the following parameters defined in \eqref{eq:hocbf learning}. The values are presented in Table~\ref{tab:hyperparameter sweep}, and the bolded values are the hyperparameters that we found yielded the best results.

With our learned HOCBF, we then solved the corresponding HOCBF constrained HJ reachability problem described by \eqref{eq:hji with cbf constraint}. As our problem is relatively low-dimensional, we can solve the constrained HJ reachability problem exactly by enumerating over the vertices of the feasible domain (but adds significant computational costs). We consider a two second horizon when solving the constrained HOCBF-HJ reachability problem and compare against various safety concepts (also synthesized via a HJ computation).

Figure~\ref{fig:trafficweaving HOCBF-HJ results} shows the unsafe region and optimal controls for various safety concepts: \textbf{HOCBF-HJ} is the proposed constrained HJ set-up, \textbf{WC-HJ} is the standard HJ formulation using worst-case contender policy without any control set restriction, \textbf{Brake} assumes both agents perform a maximum braking maneuver with zero steering (inspired by safety concepts used by AV companies \cite{NisterLeeEtAl2019,ShalevShwartzShammahEtAl2017}), and \textbf{Constant} assumes both agents maintain their current velocity \cite{WilkieVanDenBergEtAl2009}, a common assumption used in AV risk assessment methods \cite{DahlRodriguesdeCamposEtAl2019}. 
Qualitatively, we observe in Figure~\ref{fig:trafficweaving avoid set constrained unconstrained comparison, b behind} that due to the restrictions imposed by the HOCBF constraints, the HOCBF-HJ unsafe set can be significantly less ``wide'' compared to WC-HJ. The skinniness of the HOCBF-HJ unsafe set reflects practical driving assumptions where we expect nearby drivers to stay within their lanes. Figure~\ref{fig:trafficweaving avoid set constrained unconstrained comparison, b behind} also highlights the over-conservatism and impracticality of WC-HJ which would render cars traveling alongside agent $A$ in adjacent lanes as unsafe. A volumetric comparison\footnote{We considered regions where the vehicles speeds were $[15, 30$]ms$^{-1}$ and headings between $[-0.4\pi, 0.4\pi]$.} is provided in Table~\ref{tab:avoid set volume comparison} where we compare the size of the safe/unsafe regions. 
We see that compared to the Brake and Constant safety concepts, the HOCBF-HJ unsafe set is the most contained within the WC-HJ unsafe set---the regions which WC-HJ deems unsafe but HOCBF-HJ / Brake / Constant deems safe are of a similar ballpark (between 1.30\% to 1.49\%), but the size of the region which WC-HJ deems safe and HOCBF-HJ unsafe is significantly lower compared to Brake and Constant (0.15\% versus 2.52\% and 2.79\% respectively).
Figure~\ref{fig:trafficweaving control comparison, b behind} illustrates the optimal control for a car-following scenario depicted in Figure~\ref{fig:trafficweaving avoid set constrained unconstrained comparison, b behind}. The gray shaded region denotes the infeasible controls determined by the HOCBF constraint, and we see that the optimal HOCBF-HJ control (disturbance), shown by the green dot, maximizes (minimizes) $\nabla \hjvalue(x,t)^Tg(x)\uA$ ($\nabla \hjvalue(x,t)^Th(x)\uB$) while satisfying the HOCBF constraint. While the optimal control and disturbance for WC-HJ, denoted by the black dot, are the extremes of the control bounds. Figure~\ref{fig:trafficweaving control comparison, b behind} reflects intuitive and expected driving behaviors---it is unreasonable for agent A to assume that agent B will aggressively accelerate from behind.

\subsubsection{Offline evaluation on real traffic data:} We evaluate how various safety concepts perform \textit{ex post facto} on the NGSIM dataset \cite{NGSIM2016}, a highway driving dataset containing vehicles traveling along the US 101 highway in Los Angeles, USA. There are no collisions observed, and therefore we expect all the datapoints to be considered ``safe''.
Table~\ref{tab:trafficweaving ngsim percentile} presents the mean and various percentile values of the HJ value of each safety concept. While it is difficult to definitively say that a particular safety concept is better than another based on these statistics since there are a lot of nuances in the interpreting the HJ value, we make the following observations: (i) The Brake safety concept presents the highest values in all the statistics, indicating that it is the most optimistic out of the safety concepts studied. Being overly-optimistic can lead to over-confident driving where the AV can over-estimate how safe a situation is. (ii) For situations when the HJ value is low (i.e., for low percentiles), both the HOCBF-HJ and WC-HJ safety concept present very similar statistics, but the HOCBF-HJ statistics gradually become more similar to the Constant statistics as we consider higher percentile values. This suggests that (perhaps not too surprisingly) HOCBF-HJ behaves like a hybrid between the WC-HJ and Constant safety concepts.

\section{Limitations, Future Work, and Conclusions}
\label{sec:limitation_future_conclusions}

We conclude by highlighting some limitations of this work, laying out exciting future directions, and summarizing our key contributions.

\subsection{Limitations}
A major limitation of this work stems from (i) the curse of dimensionality in solving the HJI PDE explicitly (as opposed to implicit representations \cite{BansalTomlin2021}), and (ii) multi-agent extensions to the HOCBF and HJ reachability theory. Although we are currently using pairwise decomposition when considering multi-agent scenarios, pairwise decomposition for multi-agent analysis is effective in practice \cite{WangLeungEtAl2020,NisterLeeEtAl2019}, and in some settings, multi-agent analysis reduces to a pairwise decomposition \cite{ChenSingletaryEtAl2021}. While the eventual goal is to investigate using implicit representation (e..g, neural networks) when solving the HJI PDE to account for high dimensional scenarios, the (HO)CBF and HJ reachability formulation nonetheless provides useful inductive biases in building responsibility-aware safety concepts.

Our HOCBF-constrained safety concepts does run the risk of encountering behaviors not abiding by the HOCBF constraint and therefore invalidating assurances provided by the synthesized safety concept. While we can further improve the behavior learning step to help reduce model mismatch, there has been complementary work in out-of-distribution (OOD) detection to catch anomalous events \cite{FaridVeerEtAl2021}. Such OOD detectors are designed to be used as a complementary safety filter and catch rare tail events that are not nominally considered by the system.

\subsection{Future work} 
An area primed for future work is in investigating the division of responsibility for the constraint satisfaction of \eqref{eq:hji with cbf constraint}. A similar consideration is studied in \cite{GuoWangEtAl2021} which analyzes how much each agent should contribute to constraint satisfaction in a multi-agent setting. Fruitful future work entails learning appropriate responsibility allocation within a game theoretic context \cite{TsaknakisHongEtAl2021,GoktasGreenwald2021}, and combining offline safety concept synthesis with online responsibility learning.

\subsection{Conclusions} We have proposed a data-driven safety concept that is robust yet reflective of real-world driving interaction behaviors. We first learn control sets describing collision avoidance behaviors by leveraging high order control barrier functions, and then use them to constrain the HJ reachability computation used for safety concept synthesis. We show that our learned safety concept is less overly-conservative than other common AV safety concepts, and captures responsible behaviors, though there is more work to be done in testing safety concepts in closed-loop AV operations, and extensions to multi-agent interactive scenarios, especially where the division of responsibility for collision avoidance is ambiguous.

\bibliographystyle{ieeetran}
\bibliography{../../../bib/main,../../../bib/ctrl_papers}  

\newcommand{\noopsort}[1]{} \newcommand{\printfirst}[2]{#1}
  \newcommand{\singleletter}[1]{#1} \newcommand{\switchargs}[2]{#2#1}
\begin{thebibliography}{10}
\providecommand{\url}[1]{#1}
\csname url@rmstyle\endcsname
\providecommand{\newblock}{\relax}
\providecommand{\bibinfo}[2]{#2}
\providecommand\BIBentrySTDinterwordspacing{\spaceskip=0pt\relax}
\providecommand\BIBentryALTinterwordstretchfactor{4}
\providecommand\BIBentryALTinterwordspacing{\spaceskip=\fontdimen2\font plus
\BIBentryALTinterwordstretchfactor\fontdimen3\font minus
  \fontdimen4\font\relax}
\providecommand\BIBforeignlanguage[2]{{%
\expandafter\ifx\csname l@#1\endcsname\relax
\typeout{** WARNING: IEEEtran.bst: No hyphenation pattern has been}%
\typeout{** loaded for the language `#1'. Using the pattern for}%
\typeout{** the default language instead.}%
\else
\language=\csname l@#1\endcsname
\fi
#2}}

\bibitem{IEEEP28462022}
``{IEEE Standard for Assumptions in Safety-Related Models for Automated Driving
  Systems},'' \emph{{IEEE Std 2846-2022}}, 2022.

\bibitem{LeungBajcsyEtAl2022}
K.~Leung, A.~Bajcsy, E.~Schmerling, and M.~Pavone, ``Towards the unification
  and data-driven synthesis of autonomous vehicle safety concepts,''
  \emph{{{Available at }\url{https://arxiv.org/abs/2107.14412}}}, 2022.

\bibitem{PhanMinhHowingtonEtAl2022}
T.~Phan-Minh, F.~Howington, T.-S. Chu, S.~U. Lee, M.~S. Tomov, N.~Li, C.~Dicle,
  S.~Findler, F.~Suarez-Ruiz, R.~Beaudoin, B.~Yang, S.~Omari, and E.~M. Wolff,
  ``Driving in real life with inverse reinforcement learning,''
  \emph{{Available at }\url{https://arxiv.org/abs/2206.03004}}, 2022.

\bibitem{LeungSchmerlingEtAl2020}
K.~Leung, E.~Schmerling, M.~Zhang, M.~Chen, J.~Talbot, J.~C. Gerdes, and
  M.~Pavone, ``On infusing reachability-based safety assurance within planning
  frameworks for human-robot vehicle interactions,'' \emph{{Int.\ Journal of
  Robotics Research}}, vol.~39, pp. 1326--1345, 2020.

\bibitem{WangLeungEtAl2020}
X.~Wang, K.~Leung, and M.~Pavone, ``Infusing reachability-based safety into
  planning and control for multi-agent interactions,'' in \emph{{IEEE/RSJ Int.\
  Conf.\ on Intelligent Robots \& Systems}}, 2020.

\bibitem{OborilBuerkleEtAl2022}
F.~Oboril, C.~Buerkle, A.~Sussmann, S.~Bitton, and S.~Fabris, ``{MTBF Model for
  AVs} -- from perception errors to vehicle-level failures,'' in \emph{{IEEE
  Intelligent Vehicles Symposium}}, 2022.

\bibitem{TopanLeungEtAl2022}
S.~Topan, K.~Leung, Y.~Chen, P.~Tupekar, E.~Schmerling, J.~Nilsson, M.~Cox, and
  M.~Pavone, ``Interaction-dynamics-aware perception zones for obstacle
  detection safety evaluation,'' in \emph{{IEEE Intelligent Vehicles
  Symposium}}, 2022.

\bibitem{MitchellBayenEtAl2005}
I.~M. Mitchell, A.~M. Bayen, and C.~J. Tomlin, ``A time-dependent
  {Hamilton-Jacobi} formulation of reachable sets for continuous dynamic
  games,'' \emph{{IEEE Transactions on Automatic Control}}, vol.~50, no.~7, pp.
  947--957, 2005.

\bibitem{AbbeelNg2004}
P.~Abbeel and A.~Y. Ng, ``Apprenticeship learning via inverse reinforcement
  learning,'' in \emph{{Int.\ Conf.\ on Machine Learning}}, 2004.

\bibitem{ZiebartMaasEtAl2008}
B.~D. Ziebart, A.~Maas, J.~A. Bagnell, and A.~K. Dey, ``Maximum entropy inverse
  reinforcement learning,'' in \emph{{Proc.\ AAAI Conf.\ on Artificial
  Intelligence}}, 2008.

\bibitem{LevineKoltun2012}
S.~Levine and V.~Koltun, ``Continuous inverse optimal control with locally
  optimal examples,'' in \emph{{Int.\ Conf.\ on Machine Learning}}, 2012.

\bibitem{SadighSastryEtAl2016c}
D.~Sadigh, S.~Sastry, S.~A. Seshia, and A.~D. Dragan, ``Planning for autonomous
  cars that leverage effects on human actions,'' in \emph{{Robotics: Science
  and Systems}}, 2016.

\bibitem{XiaoBelta2021}
W.~Xiao and C.~Belta, ``High order control barrier functions,'' \emph{{IEEE
  Transactions on Automatic Control}}, vol.~67, no.~7, pp. 3655--3662, 2021.

\bibitem{FioriniShiller1998}
P.~Fiorini and Z.~Shiller, ``Motion planning in dynamic environments using
  velocity obstacles,'' \emph{{Int.\ Journal of Robotics Research}}, vol.~17,
  no. 10-11, pp. 760--772, 1998.

\bibitem{WilkieVanDenBergEtAl2009}
D.~Wilkie, J.~Van Den~Berg, and D.~Manocha, ``Generalized velocity obstacles,''
  in \emph{{IEEE/RSJ Int.\ Conf.\ on Intelligent Robots \& Systems}}, 2009.

\bibitem{HolmesKousikEtAl2020}
P.~Holmes, S.~Kousik, B.~Zhang, D.~Raz, C.~Barbalata, M.~Johnson-Roberson, and
  R.~Vasudevan, ``Reachable sets for safe, real-time manipulator trajectory
  design,'' in \emph{{Robotics: Science and Systems}}, 2020.

\bibitem{AlthoffDolan2014}
M.~Althoff and J.~Dolan, ``Online verification of automated road vehicles using
  reachability analysis,'' \emph{{IEEE Transactions on Robotics}}, vol.~30,
  no.~4, pp. 903--918, 2014.

\bibitem{KuwataTeoEtAl2009}
Y.~Kuwata, J.~Teo, G.~Fiore, S.~Karaman, E.~Frazzoli, and J.~P. How,
  ``Real-time motion planning with applications to autonomous urban driving,''
  \emph{{IEEE Transactions on Control Systems Technology}}, vol.~17, no.~5, pp.
  1105--1118, 2009.

\bibitem{JansonHuEtAl2018}
L.~Janson, T.~Hu, and M.~Pavone, ``Safe motion planning in unknown
  environments: Optimality benchmarks and tractable policies,'' in
  \emph{{Robotics: Science and Systems}}, 2018.

\bibitem{NisterLeeEtAl2019}
D.~Nistér, H.-L. Lee, J.~Ng, and Y.~Wang, ``An introduction to the safety
  force field,'' \emph{{Available at
  }\url{https://www.nvidia.com/content/dam/en-zz/Solutions/self-driving-cars/safety-force-field/the-safety-force-field.pdf}},
  2019.

\bibitem{ShalevShwartzShammahEtAl2017}
S.~Shalev-Shwartz, S.~Shammah, and A.~Shashua, ``On a formal model of safe and
  scalable self-driving cars,'' \emph{{Available at
  }\url{https://arxiv.org/abs/1708.06374}}, 2017.

\bibitem{FridovichKeilBajcsyEtAl2019}
D.~Fridovich-Keil, A.~Bajcsy, F.~Fisac, Jaime, S.~Herbert, S.~Wang, A.~D.
  Dragan, and C.~J. Tomlin, ``Confidence-aware motion prediction for real-time
  collision avoidance,'' \emph{{Int.\ Journal of Robotics Research}}, 2019.

\bibitem{TianSunEtAl2022}
R.~Tian, L.~Sun, A.~Bajcsy, M.~Tomizuka, and A.~D. Dragan, ``Safety assurances
  for human-robot interaction via confidence-aware game-theoretic human
  models,'' in \emph{{Proc.\ IEEE Conf.\ on Robotics and Automation}}, 2022.

\bibitem{RudenkoPalmieriEtAl2020}
A.~Rudenko, L.~Palmieri, M.~Herman, K.~M. Kitani, D.~M. Gavrila, and K.~O.
  Arras, ``Human motion trajectory prediction: A survey,'' \emph{{Int.\ Journal
  of Robotics Research}}, vol.~39, no.~8, pp. 895--935, 2020.

\bibitem{AmesCooganEtAl2019}
A.~D. Ames, S.~Coogan, M.~Egerstedt, G.~Notomista, K.~Sreenath, and P.~Tabuada,
  ``Control barrier functions: Theory and applications,'' in \emph{{European
  Control Conference}}, 2019.

\bibitem{RobeyHuEtAl2020}
A.~Robey, H.~Hu, L.~Lindemann, H.~Zhang, D.~Dimarogonas, S.~Tu, and N.~Matni,
  ``Learning control barrier functions from expert demonstrations,'' in
  \emph{{Proc.\ IEEE Conf.\ on Decision and Control}}, 2020.

\bibitem{RobeyLindemannEtAl2021}
A.~Robey, L.~Lindemann, S.~Tu, and N.~Matni, ``Learning robust hybrid control
  barrier functions for uncertain systems,'' in \emph{{Hybrid Systems:
  Computation and Control}}, 2021.

\bibitem{QinZhangEtAl2021}
Z.~Qin, K.~Zhang, Y.~Chen, C.~Jingkai, and C.~Fan, ``Learning safe multi-agent
  control with decentralized neural barrier certificates,'' in \emph{{Int.\
  Conf.\ on Learning Representations}}, 2021.

\bibitem{LyuLuoEtAl2022}
Y.~Lyu, W.~Luo, and J.~M. Dolan, ``Adaptive safe merging control for
  heterogeneous autonomous vehicles using parametric control barrier
  functions,'' \emph{{Available at }\url{https://arxiv.org/abs/2202.09936}},
  2022.

\bibitem{MargellosLygeros2011}
K.~Margellos and J.~Lygeros, ``{Hamilton-Jacobi} formulation for reach-avoid
  differential games,'' \emph{{IEEE Transactions on Automatic Control}},
  vol.~56, no.~8, pp. 1849--1861, 2011.

\bibitem{Sill1997}
J.~Sill, ``Monotonic networks,'' in \emph{{Conf.\ on Neural Information
  Processing Systems}}, 1997.

\bibitem{WehenkelLouppe2019}
A.~Wehenkel and G.~Louppe, ``Unconstrained monotonic neural networks,'' in
  \emph{{Conf.\ on Neural Information Processing Systems}}, 2019.

\bibitem{JAX2018}
J.~Bradbury, R.~Frostig, P.~Hawkins, M.~J. Johnson, C.~Leary, D.~Maclaurin,
  G.~Necula, A.~Paszke, J.~Vander{P}las, S.~Wanderman-{M}ilne, and Q.~Zhang,
  ``{JAX}: composable transformations of {P}ython+{N}um{P}y programs,''
  \emph{Available at http://github.com/google/jax}, 2018.

\bibitem{ChenDathathriEtAl2019}
Y.~Chen, S.~Dathathri, T.~Phan-Minh, and R.~M. Murray, ``Counter-example guided
  learning of bounds on environment behavior,'' in \emph{{Conf.\ on Robot
  Learning}}, 2019.

\bibitem{Neumann1927}
J.~von Neumann, ``{Zur Theorie der Gesellschaftsspiele},'' \emph{{Mathematische
  Annalen}}, vol. 100, no.~1, pp. 295--320, 1927.

\bibitem{SchmerlingLeungEtAl2018}
E.~Schmerling, K.~Leung, W.~Vollprecht, and M.~Pavone, ``Multimodal
  probabilistic model-based planning for human-robot interaction,'' in
  \emph{{Proc.\ IEEE Conf.\ on Robotics and Automation}}, 2018.

\bibitem{DahlRodriguesdeCamposEtAl2019}
J.~Dahl, G.~Rodrigues~de Campos, C.~Olsson, and J.~Fredriksson, ``{Collision
  Avoidance: A Literature Review on Threat-Assessment Techniques},''
  \emph{{IEEE Transactions on Intelligent Vehicles}}, vol.~4, no.~1, pp.
  101--113, 2019.

\bibitem{NGSIM2016}
USDOT, ``Next generation simulation ({NGSIM}) vehicle trajectories and
  supporting data,'' \emph{U.S. Department of Transportation, Intelligent
  Transportation Systems Joint Program}, 2016, {Available at
  }\url{https://data.transportation.gov/Automobiles/Next-Generation-Simulation-NGSIM-Vehicle-Trajector/8ect-6jqj}.

\bibitem{BansalTomlin2021}
S.~Bansal and C.~Tomlin, ``{DeepReach}: A deep learning approach to
  high-dimensional reachability,'' in \emph{{Proc.\ IEEE Conf.\ on Robotics and
  Automation}}, 2021.

\bibitem{ChenSingletaryEtAl2021}
Y.~Chen, A.~Singletary, and A.~D. Ames, ``Guaranteed obstacle avoidance for
  multi-robot operations with limited actuation: A control barrier function
  approach,'' \emph{{IEEE Control Systems Letters}}, vol.~5, no.~1, pp.
  127--132, 2021.

\bibitem{FaridVeerEtAl2021}
A.~Farid, S.~Veer, and A.~Majumdar, ``Task-driven out-of-distribution detection
  with statistical guarantees for robot learning,'' in \emph{{Conf.\ on Robot
  Learning}}, 2021.

\bibitem{GuoWangEtAl2021}
K.~Guo, D.~Wang, T.~Fan, and J.~Pan, ``{VR-ORCA}: Variable responsibility
  optimal reciprocal collision avoidance,'' \emph{{IEEE Robotics and Automation
  Letters}}, vol.~6, no.~3, pp. 4520--4527, 2021.

\bibitem{TsaknakisHongEtAl2021}
I.~Tsaknakis, M.~Hong, and S.~Zhang, ``Minimax problems with coupled linear
  constraints: Computational complexity, duality and solution methods,''
  \emph{{Available at }\url{https://arxiv.org/abs/2110.11210}}, 2021.

\bibitem{GoktasGreenwald2021}
D.~Goktas and A.~Greenwald, ``Convex-concave min-max {Stackelberg} games,'' in
  \emph{{Conf.\ on Neural Information Processing Systems}}, 2021.

\end{thebibliography}

\end{document}